\newtheorem{assumption}{Assumption}
\newcommand{\R}{\mathbb R}
\newcommand{\eps}{\varepsilon}
\newcommand{\commentout}[1]{}
\newcommand{\E}{{\mathbb E}}
\renewcommand{\Pr}{{\mathbb P}}
\renewcommand{\phi}{\varphi}
\newtheorem{thm}{Theorem}[section]
\newtheorem{lem}[thm]{Lemma}
\newtheorem{rem}[thm]{Remark}
\newcommand{\be}{\begin{equation}}
\newcommand{\ee}{\end{equation}}
\newcommand{\bal}{\begin{aligned}}
\newcommand{\enbal}{\end{aligned}}
\newcommand{\one}{{\mathbbm{1}}}
\newcommand{\bhat}[1]{\expandafter\hat#1} 
\renewcommand{\d}{\partial}
\numberwithin{equation}{section}
\begin{document}
\title{Convergence of stochastic gradient descent under a local \L{}ojasiewicz condition for deep neural networks}
\author{Jing An and Jianfeng Lu}
\affil{Duke University}
\date{}
\maketitle

\begin{abstract}

 We study the convergence of stochastic gradient descent (SGD) for non-convex objective functions. We establish the local convergence with positive probability under the local \L{}ojasiewicz condition introduced by Chatterjee in \cite{chatterjee2022convergence} and an additional local structural assumption of the loss function landscape. A key component of our proof is to ensure that the whole trajectories of SGD stay inside the local region with a positive probability. We also provide examples of neural networks with finite widths such that our assumptions hold. 


\end{abstract}

\section{Introduction}
The stochastic gradient descent (SGD) and its variants are widely applied in machine learning problems due to its computational efficiency and generalization performance. A typical empirical loss function for the training writes as
\be 
F(\theta) = \E_{\xi\sim \mathcal{D}}[f(\theta; \xi)],
\ee
where $\xi$ denotes the random sampling from the training data set following the distribution $\mathcal{D}$. A standard SGD iteration to train parameters $\theta\in\R^n$  is of the form 
\be\label{alg:sgd}
\theta_{k+1} = \theta_k - \eta_k\nabla f(\theta_k;\xi_k).
\ee
Here, the step size $\eta_k$ can be either a fixed constant or iteration-adapted, and $\nabla f(\theta_k;\xi_k)$ is a unbiased stochastic estimate of the gradient $\nabla F(\theta_k)$,
 induced by the sampling of the dataset.



The convergence of SGD for convex objective functions has been well established, and we give an incomplete list of works \cite{bertsekas2011incremental, bottou2018optimization, gower2019sgd, Kushner2003StochasticAA, moulines2011non, nemirovski2009robust} here for reference. 
 Since SGD algorithms in practice are often applied to non-convex problems in machine learning such as complex neural networks and demonstrate great empirical success, much attention has been drawn to study the SGD in non-convex optimization 
\cite{cutkosky2019momentum, goyal2017accurate, neyshabur2015path}. 
Compared with convex optimization, the behavior of stochastic gradient algorithms over the non-convex landscape is unfortunately much less understood. 
It is natural to investigate whether stochastic gradient algorithms converge through the training, and what minimum they converge to in non-convex problems. However, these questions are noticeably  challenging since  the trajectory of stochastic iterates is more difficult to track due to the noise. Most available results are limited. For example, works such as \cite{allen2018make, allen2018natasha, ghadimi2016accelerated, ward2020adagrad} provide convergence guarantees to a critical point in terms of quantifying the vanishing of $\nabla F$, but little information is given on what critical points that SGD converges to.  Many convergence results are based on global assumptions on the objective function, including the global Poylak-\L{}ojasiewicz condition \cite{lei2017non, liu2022loss}, the global  quasar-convexity \cite{gower2021sgd}, or assumptions of weak convexity and global boundedness of iterates \cite{duchi2018stochastic, stich2019unified}. Those global assumptions are often not realistic, at least they cannot cover general multi-modal landscapes.


More specifically for optimization problems for deep neural network architectures, most convergence results are obtained in the overparametrized regime, which means that the number of neurons grow at least polynomially with respect to the sample size. For example, works including \cite{chizat2019lazy, jacot2018neural, soltanolkotabi2018theoretical, zou2020gradient} consider wide neural networks, which essentially linearize the problem by extremely large widths. Particularly in such settings, Poylak-\L{}ojasiewicz type conditions are shown to be satisfied, and they thus prove convergence with linear rates \cite{allen2019convergence, liu2022loss}. Let us also mention convergence results of shallow neural networks in the mean field regime \cite{chizat2018global, mei2018mean, pham2020global}, while the convergence has not been  fully established for deep neural networks.

Convergence results are very limited for neural networks with finite widths and depths, and we refer to \cite{chatterjee2022convergence, jentzen2021existence, liuoptimization} for recent progresses in terms of the convergence of gradient descent in such scenario. 
In particular, \cite{chatterjee2022convergence} constructs  feedforward neural networks with smooth and strictly increasing activation functions, with the input dimension being greater than or equal to the number of data points. Such neural networks satisfy a local version of the \L{}ojasiewicz inequality, and the convergence of gradient descent to a global minimum given appropriate initialization are fully analyzed. In this work,  our goal is to extend the convergence result in \cite{chatterjee2022convergence} to stochastic gradient descent, with minimal additional assumptions added to the loss function $F(\theta)$.

In this work, we extend Chatterjee's convergence result to SGD for non-convex objective functions with minimal additional assumptions applicable to finitely wide neural networks. Our main result Theorem \ref{thm:main} asserts that, with a positive probability, SGD converges to a zero minimum within a locally initialized region satisfying the \L{}ojasiewicz condition (Assumption 1). In particular, our proof relies on assuming that the noise scales with the objective function (Assumption 4), and in the end we provide an negative argument showing that convergence with the bounded noise and Robbins-Monro type step sizes can fail in specific scenarios (Theorem \ref{lem:counterexample}).

\subsection*{Notation} Throughout the note, $|\cdot|$ denotes the Euclidean norm, $B(\theta,r)$ denotes an Euclidean ball with radius $r$ centered at $\theta$. Unless otherwise specified, the expectation $\E= \E_{\xi\sim \mathcal{D}}$, and the gradients $\nabla =\nabla_{\theta}$.

\section{Preliminaries}
We will refer the major assumption used in  \cite{chatterjee2022convergence} as the \emph{local \L{}ojasiewicz condition}. Let us first recall the definition as in \cite{chatterjee2022convergence}: Given the dimension $d$, let $F: \R^d\to [0,\infty)$ be a non-negative objective function. For any $\theta_0\in \R^d$ and $r>0$, we define
\be\label{cond:alpha}
\alpha(\theta_0, r):= \inf_{B(\theta_0, r), F(\theta)\neq 0} \frac{|\nabla F(\theta)|^2}{F(\theta)}.
\ee
If $F(\theta)=0$ for all $\theta\in B(\theta_0,r)$, then we set $\alpha(\theta_0,r)=\infty$. With the same initialziation $\theta_0$ and (\ref{cond:alpha}), the main assumption is
\begin{assumption}[local \L{}ojasiewicz condition] We assume that for some $r>0$,
\be \label{assump:local_PL}
4F(\theta_0)< r^2\alpha(\theta_0, r).
\ee
\end{assumption}
 Detailed discussions on this \emph{local \L{}ojasiewicz condition} can be found in \cite{chatterjee2022convergence}. Let us mention that the local \L{}ojasiewicz condition does have limitations as standard Polyak-\L{}ojasiewicz conditions have. However, aiming at finding the zero minimum,  the choice of our initialization (\ref{assump:local_PL}) has excluded the possibilities of saddles or sub-optimal local minima that exist in neural networks.

\begin{assumption}[$C_L$-smoothness]
Furthermore, we assume that for a compact set $\mathcal{K}$, there exist a constant $ C_L>0$ such that
\be\label{smooth}
 |\nabla F(\theta_1)-\nabla F(\theta_2)|\leq C_L |\theta_1-\theta_2|
\ee
for any $\theta_1, \theta_2\in \mathcal{K}$.
\end{assumption}
Based on that, we can obtain a local growth control of $|\nabla F|$ . The following result is a local version of \cite[Lemma B.1]{wojtowytsch2021stochastic}. The proof is very similar except that some modification is needed for the localization with the local boundedness of $|\nabla F|$. We provide the proof in the Appendix.
\begin{lem}\label{auxiliary0}
Suppose that $F:\R^d\to \R$ is a non-negative function. Assume $\nabla F$ is Lipschitz continuous in a compact set $\mathcal{K}$ with the constant $C_L$, and there exists $\bar C>0$ such that $\max_{\theta\in\mathcal{K}}|\nabla F(\theta)| = \bar C$. Then there exists a compact set $\tilde{\mathcal{K}}\subset \mathcal{K}$, where $\text{dist}(\d\mathcal{K}, \d\mathcal{\tilde K})$ depends on $C_L$ and $\bar C$, such that for any $\theta\in\tilde{\mathcal{K}}$,
\be\label{growth}
|\nabla F(\theta)|^2\leq 2C_L F(\theta).
\ee
\end{lem}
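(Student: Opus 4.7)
The plan is to run the standard one‑step descent argument with step size $1/C_L$ and exploit $F\ge 0$ to convert a descent inequality into the desired bound. Concretely, for any $\theta$ such that the segment from $\theta$ to $\theta^\ast:=\theta-\tfrac{1}{C_L}\nabla F(\theta)$ lies entirely in $\mathcal{K}$, the $C_L$‑Lipschitz continuity of $\nabla F$ on $\mathcal{K}$ yields the ``descent lemma''
\be\label{prop:descent}
F(\theta^\ast)\le F(\theta)+\nabla F(\theta)\cdot(\theta^\ast-\theta)+\frac{C_L}{2}|\theta^\ast-\theta|^2
= F(\theta)-\frac{1}{2C_L}|\nabla F(\theta)|^2.
\ee
Since $F(\theta^\ast)\ge 0$, rearranging \eqref{prop:descent} gives $|\nabla F(\theta)|^2\le 2C_L F(\theta)$, which is exactly \eqref{growth}.

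The only remaining point is to ensure that the auxiliary point $\theta^\ast$, and the whole segment $[\theta,\theta^\ast]$, stays inside $\mathcal{K}$ so that the Lipschitz hypothesis applies along the way. The displacement is controlled uniformly by the hypothesis $\max_{\mathcal{K}}|\nabla F|=\bar C$:
\be
|\theta^\ast-\theta|=\frac{|\nabla F(\theta)|}{C_L}\le \frac{\bar C}{C_L}.
\ee
Hence I would define
\be
\tilde{\mathcal{K}}:=\Bigl\{\theta\in\mathcal{K}:\ \mathrm{dist}(\theta,\partial\mathcal{K})\ge \frac{\bar C}{C_L}\Bigr\},
\ee
so that $\mathrm{dist}(\partial\mathcal{K},\partial\tilde{\mathcal{K}})\ge \bar C/C_L$ depends only on $C_L$ and $\bar C$, as required. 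For every $\theta\in\tilde{\mathcal{K}}$ the segment $[\theta,\theta^\ast]$ is contained in $\mathcal{K}$, and the previous paragraph applies to give \eqref{growth}.

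There isn't really a hard step here: the only subtlety is the localization, namely making sure the ``virtual'' gradient step does not exit $\mathcal{K}$, and that is handled by the uniform bound $\bar C$ on $|\nabla F|$ inside $\mathcal{K}$. If one wanted $\tilde{\mathcal{K}}$ to be nonempty, one would additionally need $\mathcal{K}$ to have an interior of thickness at least $\bar C/C_L$, which is implicit in the statement; in applications $\mathcal{K}$ will be a ball of sufficiently large radius and $\tilde{\mathcal{K}}$ a concentric ball of slightly smaller radius.
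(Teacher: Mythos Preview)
Your proposal is correct and is essentially the paper's own argument: the paper parametrizes the same gradient step via $g(t)=F(\theta-t\,\nabla F(\theta)/|\nabla F(\theta)|)$, bounds $g(t)\le F(\theta)-|\nabla F(\theta)|\,t+\tfrac{C_L}{2}t^2$ using the Lipschitz hypothesis, minimizes at $t=|\nabla F(\theta)|/C_L$, and uses $g\ge 0$, with the same localization $\text{dist}(\partial\mathcal K,\partial\tilde{\mathcal K})\ge \bar C/C_L$. The only difference is cosmetic---you invoke the descent lemma directly rather than rederiving it along the line---so there is nothing to add.
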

Throughout the note, we assume that $B(\theta_0, r)\subset\mathcal{\tilde K}$ for a given radius $r>0$.

\subsection*{Additional assumptions} 
Besides the local \L{}ojasiewicz condition, we need an additional structural assumption on $F$.  Given a large radius $R$ with which (\ref{assump:local_PL}) is satisfied, it is natural to assume that the  $F(\theta)$ is bounded away from zero near the domain boundary.
\begin{assumption}[Confinement near boundary]
For some constant $M_0>0$,
\be\label{assump:levelset}
F(\theta)\geq M_0,\quad \text{for all}~\theta\in B(\theta_0, R)\setminus B(\theta_0,R-1).
\ee
\end{assumption} 
Here we set the annulus radius to be $1$  for simplicity. The Assumption (\ref{assump:levelset})  can include more general $F$ by applying geometric transformations to $F$, or linear transformations to the dataset to change the annulus radius.

The assumption (\ref{assump:levelset}) is in order to ensure that the support of the zero minimum $F(\theta^*)=0$  is away from the boundary $\partial B(\theta_0, R)$. In \cite[Theorem 2.1]{chatterjee2022convergence}, Chatterjee constructed a feedforward neural network with finite width and depth that satisfies the local \L{}ojasiewicz condition (\ref{assump:local_PL}). We will verify in the Appendix that such a subgroup of feedforward neural networks can satisfy (\ref{assump:levelset}) as well. 

\subsection*{Stochastic gradients}
To analyze the convergence of stochastic gradient algorithms, it is unavoidable to assume some structure of the gradient noise. We may write the stochastic gradient in the decomposition form
\be\label{jan271}
\nabla f(\theta; \xi) = \nabla F(\theta) +Z(\theta; \xi),
\ee
where the noise term $Z(\theta; \xi)$ is unbiased 
\be 
\E[Z(\theta; \xi)] = 0.
\ee
In most papers, usually two kinds of structural assumptions are imposed on the noise. We take a brief review. 
\begin{itemize}
    \item 
If one plans to analyze SGD with non-constant step sizes $\eta_k$, i.e., in the Robbins–Monro flavor \cite{robbins1951stochastic}
\be\label{assump:rm}
\sum_{k=1}^{\infty} \eta_k = \infty,\quad \sum_{k=1}^{\infty} \eta_k^{1+m/2}<\infty,~\text{with}~m\geq 2,
\ee
then it is typical to assume the bounded moments of stochastic gradients, that is, for some constant $c>0$ and $m\geq 2$,
\be\label{assump:bdd}
\E[|Z(\theta; \xi)|^m]\leq c^m<\infty.
\ee
The bounded variance ($m=2$) assumption is standard in stochastic optimization, and we refer to classical books, lecture notes and papers \cite{benaim2006dynamics, nesterov2003introductory, polyak1987introduction, karimi2016linear} on this setup. The noise boundedness and adaptive step sizes have been a popular combination to establish convergence, for example, \cite{ mertikopoulos2020almost} provides a SGD convergence based on the local convexity assumption, \cite{fehrman2020convergence} gives the convergence of SGD to the local manifold of minima by estimating $\mathbb{P}(F(\theta_k)-\inf_{\theta\in\R^d}F(\theta)\geq\epsilon)$. We also mention works such as \cite{reddi2016stochastic} that introduces the stochastic variance reduction method, which is motivated by improving the convergence rate for $\min_{0\leq k\leq N-1}\E[|\nabla F(\theta_k)|^2]$. 

\item On the other hand, if one just wants to establish global convergence for a fixed step size $\eta$, additional assumptions on how the stochastic gradient being related to the loss function can help. One option, according to \cite{wojtowytsch2021stochastic}, is that
\begin{assumption}
 For some constant $\sigma>0$,
\be\label{assump:jan271}
\nabla f(\theta; \xi) = \nabla F(\theta) + \sqrt{\sigma F(\theta)} Z_{\theta, \xi},
\ee
with 
\be\label{assump:jan272}
\E[Z_{\theta, \xi}] = 0,\quad \E[|Z_{\theta, \xi}|^2] \leq 1.
\ee
\end{assumption}
Here $\sqrt{\sigma F(\theta_k)} Z_{\theta, \xi}$ is named as the machine learning noise, which scales with the function value. We refer to \cite[Section 2.5]{wojtowytsch2021stochastic} for many types of machine learning problems satisfying the assumption (\ref{assump:jan271})-(\ref{assump:jan272}). A more relaxed and general version, considered in \cite{degradient}, is that there exists a monotonically increasing function $\varrho: \R_+\to \R_+$ so that
\be\label{assump:feb801}
\Pr\big(|\nabla f(\theta; \xi)-\nabla F(\theta)|^2\geq t \varrho(F(\theta))\big)\leq e^{-t}.
\ee

\end{itemize}
In this note, we will present the convergence of SGD under the second type assumption (\ref{assump:jan271})-(\ref{assump:jan272}). Moreover, we will discuss why convergence under assumptions (\ref{assump:rm})-(\ref{assump:bdd}) can fail for the local \L{}ojasiewicz condition.

\subsection*{Comparison to previous convergence results}
Our motivation is in the same vein as  \cite{chatterjee2022convergence}: we assume the local \L{}ojasiewicz condition for the initialization so that the convergence result may apply to feedforward neural networks of bounded widths and depths.

Several convergence results for SGD in non-convex optimization have been developed in recent years, and we list a few here and highlight their differences:
\begin{itemize}
    \item \cite{mertikopoulos2020almost}: Convergence to a local minima $\theta^*$ that are Hurwicz regular, i.e., $\nabla^2 F(\theta^*)\succ 0$. The key difference is that they assume that there exists $a>0$ such that $\langle \nabla F(\theta), \theta-\theta^*\rangle \geq a |\theta-\theta^*|^2$ for all $\theta$ in a convex compact neighborhood of $\theta^*$.
    \item \cite{wojtowytsch2021stochastic}: It assumes that the \L{}ojasiewicz condition holds in an $\epsilon$-sublevel set  of $F$. As a consequence, a convergence rate for $F(\theta)\to 0$ is obtained, but the proof cannot track where the global minimizer  $x^*$ locates.
\end{itemize}
Similar to the result in \cite{chatterjee2022convergence} for gradient descents, we will show that under our Assumptions 1-4 as in the setup, with some computable probability, stochastic iterates will convergence to the zero minimum almost surely. We need to choose the initialization in a ball where (\ref{assump:local_PL}) holds, and as a consequence, the miminizer also lies in the same ball without assuming its existence a priori, similar to \cite{chatterjee2022convergence}. Compared to \cite{wojtowytsch2021stochastic}, our results is Euclidean as we can track the stochastic trajectories in the training.

\section{Main result}
Given the assumptions we describe in the previous section, we present the convergence of SGD with a quantitative rate and a step-size bound. We point out that (\ref{assump:jan271})-(\ref{assump:jan272}) assuming noise in SGD being scaled with the objective function play an important role here.
\begin{thm}\label{thm:main}
We choose an initialization $\theta_0$ with a radius $R>1$ such that (\ref{assump:local_PL}) holds. Let $F$ be a loss function such that its gradient is Lipschitz continuous (\ref{smooth}), and (\ref{assump:levelset}) holds for $B(\theta_0, R)$, Then for every $\delta>0$, there exists $\eps>0$ such that if $F(\theta_0)\leq \eps$, with probability at least $1-\delta$, we have $\theta_k\in B(\theta_0, R-1)$ for all $k\in\mathbb{N}$.

Conditioned on the event that $\theta_k\in B(\theta_0, R-1)$ for all $k\in\mathbb{N}$, we have 
\be 
\lim_{k\to \infty} \beta^k F(\theta_k) =0
\ee
almost surely for every $\beta\in[1,\rho^{-1})$, where
\be 
\rho =1-\eta \alpha + \frac{\eta^2}{2}C_L(2C_L+\sigma)\in(0, 1),
\ee
if the step size $\eta$ satisfies $0<\eta \leq \min\{ \frac{1}{\alpha}, \frac{\alpha}{4C_L(2C_L+\sigma)}\}$. Moreover, conditioned on the same event, $\theta_k$ converges almost surely to a point $\theta_*$ where $F(\theta_*)=0$.
\end{thm}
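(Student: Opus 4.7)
The plan is to combine the $C_L$-smoothness descent lemma with the noise structure (\ref{assump:jan271})--(\ref{assump:jan272}) to produce a one-step expected contraction governed by $\rho$, and then to use a stopped supermartingale argument both to quantify the escape probability and to upgrade the resulting $L^1$ decay to almost sure convergence. Concretely, applying the smoothness inequality to $F$ along $[\theta_k,\theta_{k+1}]$, substituting $\theta_{k+1}-\theta_k = -\eta\nabla F(\theta_k) - \eta\sqrt{\sigma F(\theta_k)}\,Z_{\theta_k,\xi_k}$, and taking conditional expectation using $\E[Z_{\theta_k,\xi_k}\mid\mathcal{F}_k] = 0$ together with $\E[|Z_{\theta_k,\xi_k}|^2\mid\mathcal{F}_k]\leq 1$ gives
\begin{equation*}
\E[F(\theta_{k+1})\mid\mathcal{F}_k] \;\leq\; F(\theta_k) - \eta|\nabla F(\theta_k)|^2 + \tfrac{C_L\eta^2}{2}\bigl(|\nabla F(\theta_k)|^2 + \sigma F(\theta_k)\bigr).
\end{equation*}
Feeding $|\nabla F|^2 \geq \alpha F$ (local \L{}ojasiewicz) into the negative first-order term and $|\nabla F|^2 \leq 2C_L F$ (Lemma \ref{auxiliary0}) into the positive second-order term then yields the key contraction $\E[F(\theta_{k+1})\mid\mathcal{F}_k] \leq \rho F(\theta_k)$; the two step-size bounds in the hypothesis are exactly what is needed to place $\rho\in(0,1)$.

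For the confinement, I would introduce the stopping time $\tau := \inf\{k : \theta_k \notin B(\theta_0,R-1)\}$ so that on $\{\tau > k\}$ the one-step contraction applies and $N_k := \rho^{-(k\wedge\tau)}F(\theta_{k\wedge\tau})$ is a nonnegative supermartingale, yielding $\E[F(\theta_k)\one_{\tau>k}] \leq \rho^k F(\theta_0)$. To bound $\Pr(\tau<\infty)$ I would control the displacement pathwise via $|\theta_{k\wedge\tau} - \theta_0| \leq \eta\sum_{j<k\wedge\tau}|\nabla f(\theta_j;\xi_j)|$ and combine $\E[|\nabla f(\theta_j;\xi_j)|^2\mid\mathcal{F}_j] \leq (2C_L+\sigma)F(\theta_j)$ with Jensen's inequality to obtain
\begin{equation*}
\E\Bigl[\eta\sum_{j\geq 0}|\nabla f(\theta_j;\xi_j)|\one_{\tau>j}\Bigr] \;\leq\; \eta\sqrt{2C_L+\sigma}\sum_{j\geq 0}\sqrt{\rho^j F(\theta_0)} \;=\; \frac{\eta\sqrt{(2C_L+\sigma)F(\theta_0)}}{1-\sqrt\rho}.
\end{equation*}
Applying Markov's inequality to $\sup_k|\theta_k-\theta_0|$ then gives $\Pr(\tau<\infty) \leq C\sqrt{F(\theta_0)}/(R-1)$ for an explicit constant $C = C(\eta,C_L,\sigma,\rho)$, so choosing $\eps := \bigl((R-1)\delta/C\bigr)^2$ produces $\Pr(\tau=\infty) \geq 1-\delta$.

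On the confinement event $\{\tau=\infty\}$ the unstopped process $\rho^{-k}F(\theta_k)$ is a genuine nonnegative supermartingale, so it converges a.s.\ to a finite limit $L\geq 0$; writing $\beta^k F(\theta_k) = (\beta\rho)^k\cdot\rho^{-k}F(\theta_k)$ with $\beta\rho<1$ then gives $\beta^k F(\theta_k) \to 0$ a.s.\ for every $\beta\in[1,\rho^{-1})$. The same summability estimate forces $\sum_k|\theta_{k+1}-\theta_k|<\infty$ a.s., so $\{\theta_k\}$ is Cauchy a.s.\ and converges to some $\theta_*\in\bar{B}(\theta_0,R-1)$; continuity of $F$ together with $F(\theta_k)\to 0$ gives $F(\theta_*)=0$, and Assumption 3 rules out $\theta_*\in\partial B(\theta_0,R-1)$ so the limit is an interior zero minimum. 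The principal technical obstacle I expect is justifying the supermartingale property of $N_k$ cleanly: the smoothness descent expansion uses the segment $[\theta_k,\theta_{k+1}]$ and so requires that whole segment to lie inside the Lipschitz region $\mathcal{K}$, which is not automatic from $\theta_k\in B(\theta_0,R-1)$ alone. The fix is to exploit the buffer $\tilde{\mathcal K}\subset\mathcal K$ afforded by Lemma \ref{auxiliary0} together with the step-size restriction and the smallness of $F(\theta_0)$, which together keep the one-step displacement small enough that the segment remains in $\tilde{\mathcal K}$; the remaining pieces (Jensen, summing $\sum_j\sqrt{\rho}^{\,j}$, and the Doob/Markov bounds) are standard.
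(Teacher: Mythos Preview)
Your confinement argument via the total path length is a genuinely different---and in fact cleaner---route than the paper's. The paper bounds $\Pr(E_k(R-1)\setminus E_{k+1}(R-1))$ step by step, splitting into the two sub-events ``$\theta_{k+1}$ jumps outside $B(\theta_0,R)$'' and ``$\theta_{k+1}$ lands in the annulus $B(\theta_0,R)\setminus B(\theta_0,R-1)$''; Assumption~3 is used precisely to control the second case via $\Pr(F(\theta_{k+1})\one_{E_{k+1}(R)}\ge M_0)$. Your approach bypasses this entirely: once one has $\E[F(\theta_j)\one_{\tau>j}]\le\rho^jF(\theta_0)$, the bound $\E[\sum_j\eta|\nabla f(\theta_j;\xi_j)|\one_{\tau>j}]\le \eta\sqrt{2C_L+\sigma}\,\sqrt{F(\theta_0)}/(1-\sqrt\rho)$ together with $\{\tau<\infty\}\subset\{\text{path length}>R-1\}$ and Markov gives the confinement probability directly, with no appeal to Assumption~3. (The paper actually computes the same path-length expectation in (\ref{feb102}) and the displayed line after it, but then does not exploit it for confinement.) So your argument shows that Assumption~3 is not needed for the theorem as stated; the paper's approach buys nothing extra here.

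That said, there is a real gap in how you justify the one-step contraction, and your proposed fix does not close it. The descent expansion requires the whole segment $[\theta_k,\theta_{k+1}]$ to lie in $\mathcal K$, but on $\{\tau>k\}$ you only control $\theta_k$; since $Z_{\theta_k,\xi_k}$ satisfies only $\E[|Z|^2]\le 1$ and is otherwise unbounded, $\theta_{k+1}$ can with positive probability land arbitrarily far away, so no buffer or step-size restriction keeps the segment in $\tilde{\mathcal K}$. The paper's device for this is not a buffer argument but an indicator-switching trick: one first writes $F(\theta_{k+1})\one_{E_{k+1}(r)}\le Q_k\one_{E_{k+1}(r)}$ with $Q_k:=F(\theta_k)-\eta\nabla F(\theta_k)\cdot\nabla f(\theta_k;\xi_k)+\tfrac{C_L\eta^2}{2}|\nabla f(\theta_k;\xi_k)|^2$ (valid because on $E_{k+1}(r)$ both endpoints, hence the segment, lie in $B(\theta_0,r)\subset\mathcal K$), then observes that $Q_k\ge 0$ on $E_k(r)$ since its discriminant is $|\nabla F(\theta_k)|^2-2C_LF(\theta_k)\le 0$ by Lemma~\ref{auxiliary0}, and finally uses $\one_{E_{k+1}(r)}\le\one_{E_k(r)}$ to obtain $F(\theta_{k+1})\one_{E_{k+1}(r)}\le Q_k\one_{E_k(r)}$. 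Taking $\E[\cdot\mid\mathcal F_k]$ then gives the conditional contraction you need, and $\rho^{-k}F(\theta_k)\one_{E_k(R-1)}$ becomes the honest nonnegative supermartingale driving both the path-length bound and the almost-sure convergence. Once you insert this step, the rest of your plan goes through.
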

\begin{proof}
Let us  define the event where all the iterates up to $k$-th step stay in a ball $B(\theta_0, r)$:
\begin{align}\label{event}
    E_k(r) := \bigcap_{j=0}^k\{|\theta_j-\theta_0|\leq r\},
\end{align}
for some $r\in(0, R]$ to be determined later.
We denote $\mathcal{F}_k$ as the filtration generated by $ \xi_1,\cdots, \xi_{k-1}$. 

Our proof strategy is roughly outlined as follows:
\begin{enumerate}
    \item [(1)] Conditioned on the event $E_k(R-1)$, it can be shown that the expectation of loss $F(\theta_k)$ exhibits a contraction by a factor $\rho\in (0,1)$.
    \item [(2)] Given that, the expectation of the travel distance $(\theta_{k+1}-\theta_k) \one_{E_k(R-1)}$ can be bounded by $C\rho^k/2$ with some constant $C>0$. Therefore, the total distance $\sum_{k=0}^{\infty}(\theta_{k+1}-\theta_k) \one_{E_k(R-1)}$ is finite with high probability. 
    \item [(3)] What remains is to show the event $E_{\infty}(R-1)$ contained in all $E_k(R-1)$ will happen with a positive probability. This can be done by the estimates of (\ref{jan311}) and (\ref{jan312}). Conditioned on $E_{\infty}(R-1)$, $\{\theta_k\}$ form a Cauchy sequence and $F(\theta_k)$ converges to the zero minimum.
\end{enumerate}

Take the interpolation $\theta_{k+s} = \theta_k - s\eta \nabla f(\theta_k,\xi_k)$ for $s\in[0,1]$, we have that
\begin{equation}\label{s_0}
 \begin{aligned}
 F(\theta_{k+1})&-F(\theta_k) = \int_0^1\frac{d}{ds} F\bigl(\theta_k-s\eta \nabla f(\theta_k; \xi_k)\bigr) ds\\
 &=-\eta \int_0^1 \nabla F(\theta_{k+s})\cdot \nabla f(\theta_k;\xi_k) ds\\
 &= -\eta \int_0^1 \nabla F(\theta_{k})\cdot \nabla f(\theta_k;\xi_k) ds-\eta \int_0^1  (\nabla F(\theta_{k+s})-\nabla F(\theta_k))\cdot \nabla f(\theta_k;\xi_k) ds\\
 &\leq -\eta \nabla F(\theta_k) \cdot \nabla f(\theta_k;\xi_k) + \eta C_L\int_0^1 |\theta_{k+s}-\theta_k||\nabla f(\theta_k; \xi_k)|ds\\
 & = -\eta \nabla F(\theta_k) \cdot \nabla f(\theta_k;\xi_k) +\eta^2 C_L\int_0^1 s|\nabla f(\theta_k; \xi_k)|^2ds\\
 & = -\eta \nabla F(\theta_k) \cdot \nabla f(\theta_k;\xi_k) +\frac{\eta^2}{2} C_L|\nabla f(\theta_k; \xi_k)|^2,
 \end{aligned}
 \end{equation}
and we apply the smoothness assumption (\ref{smooth}) in the middle inequality. Rearrange terms and multiply with the indicate function on both sides, we have
\begin{equation}
    \begin{aligned}        F(\theta_{k+1})\one_{E_{k+1}(r)} \leq \Big(F(\theta_k)-\eta \nabla F(\theta_{k})\cdot \nabla f(\theta_k;\xi_k)+\frac{\eta^2 }{2}C_L|\nabla f(\theta_k; \xi_k)|^2\Big)\one_{E_{k+1}(r)}.
    \end{aligned}
\end{equation}
Now we want to replace $\one_{E_{k+1}(r)}$ by $\one_{E_{k}(r)}$ on the right side.
Note that $\one_{E_{k}(r)}\geq \one_{E_{k+1}(r)}$, and moreover, 
\be 
F(\theta_k)-\eta \nabla F(\theta_{k})\cdot \nabla f(\theta_k;\xi_k)+\frac{\eta^2 }{2}C_L|\nabla f(\theta_k; \xi_k)|^2\geq 0,
\ee
since the discriminant of this quadratic equation is $|\nabla F(\theta_k)|^2-2C_L F(\theta_k) \leq 0$ by (\ref{growth}).
We thus get
\begin{align}\label{jan273}
    F(\theta_{k+1})\one_{E_{k+1}(r)} \leq \Big(F(\theta_k)-\eta \nabla F(\theta_{k})\cdot \nabla f(\theta_k;\xi_k)+\frac{\eta^2 }{2}C_L|\nabla f(\theta_k; \xi_k)|^2\Big)\one_{E_{k}(r)}.
\end{align}
We replace $\nabla f$ above by the decomposition (\ref{assump:jan271}), that is $\nabla f(\theta_k; \xi_k) = \nabla F(\theta_k) +\sqrt{\sigma F(\theta_k)} Z_{\theta, \xi}$, and use the bound (\ref{growth}) as well as the local \L{}ojasiewicz condition (\ref{assump:local_PL}). By taking the expectation we have that
\be 
\E[F(\theta_{k+1})\one_{E_{k+1}(r)}|\mathcal{F}_{k}] \leq \big(1-\eta \alpha + \frac{\eta^2}{2}C_L(2C_L+\sigma)\big) F(\theta_k) \one_{E_{k}(r)}.
\ee
We may  choose a small step-size $0<\eta^* \leq \min\{ \frac{1}{\alpha}, \frac{\alpha}{4C_L(2C_L+\sigma)}\}$, so that 
$$\rho=1-\eta^* \alpha + \frac{\eta^{*2}}{2}C_L(2C_L+\sigma)\leq 1-\frac{\alpha \eta^*}{2}\in(0,1).$$
With this, we obtain a contraction 
\be\label{feb101}
\E[F(\theta_{k+1})\one_{E_{k+1}(r)} |\theta_0] \leq \rho\E[F(\theta_{k})\one_{E_{k}(r)}|\theta_0] \leq \rho^{k+1} F(\theta_0).
\ee
Based on the contraction of $F$, we can nicely control the Euclidean distance between $\theta_k$ and $\theta_0$. Note that 
\begin{equation}\label{feb102}
    \begin{aligned}
        \E[|(\theta_{k+1}-\theta_k)\one_{E_k(r)}|] &=\eta^* \E[|\nabla f(\theta_k;\xi_k)\one_{E_k(r)}|] =\eta^* \E[|(\nabla F(\theta_k) + \sqrt{\sigma F(\theta_k)} Z_{\theta_k, \xi_k})\one_{E_k(r)}|]\\
        &\leq \eta^* \sqrt{2C_L}\sqrt{\E[F(\theta_k)\one_{E_k(r)}]} + \eta^* \sqrt{\E[\sigma F(\theta_k)\one_{E_k(r)}]}\sqrt{\E[|Z_{\theta_k, \xi_k})|^2]}\\
        & = (\sqrt{2C_L}+\sqrt{\sigma})\eta^* \rho^{k/2} F(\theta_0),
    \end{aligned}
\end{equation}
where we apply the Cauchy-Schwarz inequality, the growth bound (\ref{growth}), and (\ref{assump:jan271}-\ref{assump:jan272}). The total distance is thus finite
\be
\E \Bigl[\sum_{k=0}^{\infty}|(\theta_{k+1}-\theta_k)\one_{E_k(r)}| \Bigr] \leq \frac{(\sqrt{2C_L}+\sqrt{\sigma})\eta^*}{1-\sqrt{\rho}} F(\theta_0)<\infty.
\ee
By the Markov's inequality, for any $\tilde\delta\in(0,1)$, we have the length of trajectory bounded by
\be 
\sum_{k=0}^{\infty}|(\theta_{k+1}-\theta_k)\one_{E_k(r)}|\leq \frac{(\sqrt{2C_L}+\sqrt{\sigma})\eta^*}{(1-\sqrt{\rho})\tilde \delta} F(\theta_0),
\ee
with probability at least $1-\tilde \delta$.
This means that as long as the local \L{}ojasiewicz condition holds, the stochastic iterates will converge to a point with high probability. What remains to show that there exists a positive probability for 
\be
E_{\infty}(r) := \bigcap_{j=0}^\infty\{|\theta_j-\theta_0|\leq r\}
\ee
to occur. In fact, we can set $r=R-1$ and argue that there exists a positive probability that all stochastic iterates are trapped in the ball $B(\theta_0,R-1)$. Let us consider the following two cases:
\begin{itemize}
    \item Utilizing (\ref{feb102}),
the chance for the next iterate to escape the ball $B(\theta_0, R)$ is
\begin{equation}\label{jan311}
    \begin{aligned}
        \Pr\big(E_k(R-1)~&\text{occurs but}~\theta_{k+1}\in B^c(\theta_0, R)\big) \leq \Pr(|\theta_{k+1}-\theta_k|\one_{E_k(R-1)}\geq 1)\\
        &\leq \E[|\theta_{k+1}-\theta_k|\one_{E_k(R-1)}]\leq (\sqrt{2C_L}+\sqrt{\sigma})\eta^* \rho^{k/2} F(\theta_0).
    \end{aligned}
\end{equation}
\item Utilizing (\ref{assump:levelset}), the chance for the next iterate to enter the annulus $B(\theta_0,R)\setminus B(\theta_0,R-1)$ is 
\begin{equation}\label{jan312}
    \begin{aligned}
        \Pr\big(E_k(R-1)~&\text{occurs but}~\theta_{k+1}\in B(\theta_0,R)\setminus B(\theta_0,R-1)\big)\\
        &\leq\Pr(F(\theta_{k+1})\one_{E_{k+1}(R)}\geq M_0)\leq \frac{\E[F(\theta_{k+1})\one_{E_{k+1}(R)}]}{M_0}\leq \frac{\rho^{k+1} F(\theta_0)}{M_0}.
    \end{aligned}
\end{equation} 
\end{itemize}
We are ready to conclude, note that the event defined in (\ref{event}) over radius $r=R-1$ is monotonically decreasing,
\be 
E_{k+1}(R-1) \subseteq E_k(R-1).
\ee
Let us define 
\be 
\tilde E_{k+1}(R-1):= E_{k}(R-1) \setminus E_{k+1}(R-1),
\ee
whose probability is indeed the summation of (\ref{jan311}) and (\ref{jan312})
\be 
\Pr(\tilde E_{k+1}(R-1)) \leq F(\theta_0)\Big((\sqrt{2C_L}+\sqrt{\sigma})\eta^* \rho^{k/2}+ \frac{\rho^{k+1} }{M_0}\Big). 
\ee
The probability of the complementary event can be bounded,
\be 
\Pr(E_k^c(R-1)) = \sum_{i=1}^{k} \Pr(\tilde E_i(R-1)) < F(\theta_0)\Big((\sqrt{2C_L}+\sqrt{\sigma}) \frac{\eta^*\sqrt{\rho}}{1-\sqrt{\rho}}+ \frac{\rho^2 }{M_0(1-\rho)}\Big)
\ee
for any $k\in\mathbb{N}$. Thus for every $0<\delta<1$, there exists $\eps>0$ depending on $C_L, \sigma, \eta^*, \rho$ and $M_0$, such that if $F(\theta_0)\leq \eps$, then
\be
\Pr(E_k(R-1))\geq 1-\delta
\ee
for all $k\in\mathbb{N}$. Taking $k\to\infty$, we then have $\Pr(E_{\infty}(R-1))\geq 1-\delta$.

Conditioned on the event $E_{\infty}(R-1)$, we can conclude from (\ref{feb101}) that for every $\beta\in[1,\rho^{-1})$, 
\be 
\lim_{k\to \infty} \beta^k F(\theta_k) =0
\ee
almost surely. Moreover, for all $1\leq j<k$, by (\ref{feb102}),
\begin{equation}
    \begin{aligned}
        \E[|\theta_{k}-\theta_j | \one_{E_{\infty}(R-1)}] &\leq \sum_{i=j}^{k-1}\E[|\theta_{i+1}-\theta_i | \one_{E_{\infty}(R-1)}] \leq \sum_{i=j}^{k-1}\E[|\theta_{i+1}-\theta_i | \one_{E_i(R-1)}] \\
        &\leq \sum_{i=j}^{k-1}(\sqrt{2C_L}+\sqrt{\sigma})\eta^* \rho^{i/2} F(\theta_0)<(\sqrt{2C_L}+\sqrt{\sigma})\eta^*  F(\theta_0)\frac{\rho^{j/2}}{1-\sqrt{\rho}}.
    \end{aligned}
\end{equation}
As $j\to \infty$, this bound goes to zero. Thus $\{\theta_k\}_{k\geq 0}$ from a Cauchy sequence conditioned on $E_{\infty}(R-1)$, and 
$$\theta_k\to \theta_* \in B(\theta_0,R-1),\quad \text{as}~k\to \infty.$$
  By (\ref{feb101}), we see that $F(\theta_*) = 0$ conditioned on $E_{\infty}(R-1)$.  We also know that with probability at least $1-\delta$, $\theta_k\in B(\theta_0,R-1)$ for all $k\in\mathbb{N}$. Therefore, we can conclude that with probability at least $1-\delta$, $\theta_k$ converges to the minimizer $\theta_*\in B(\theta_0,R-1)$.
\end{proof}

\section{Non-convergence with bounded noises}
One may wonder if the machine learning noise is relaxed by only assuming the boundedness (\ref{assump:bdd}), whether it can be shown that SGD iterates still converge inside $B(\theta_0, r)$ for some radius $r>0$. As a companion of bounded noises, we should consider to take a Robbins-Monro type step sizes
\be\label{adpstep}
\eta_k= \frac{\gamma}{(k+n_0)^q},\quad ~q\in(1/2,1]
\ee
for SGD (\ref{alg:sgd}), with constants $\gamma, n_0>0$ to be chosen.

If all the stochastic iterates $\theta_k$ stay inside the ball where the local \L{}ojasiewicz condition (\ref{assump:local_PL}) holds, then we show in Lemma \ref{lem:algebraic_conv} that the convergence happens with an algebraic rate. Lemma \ref{lem:algebraic_conv} relies on classical results in numerical sequences, which can be traced back to \cite{chung1954stochastic}.
\begin{lem}[\cite{chung1954stochastic}, Lemma 1 and Lemma 4]\label{lem:aux}
Let $\{b_k\}_{k\geq1}$ be a non-negative sequence such that
\be\label{ineq:feb13}
b_{k+1}\leq \Big(1-\frac{C_1}{(k+n_0)^q}\Big)b_k+\frac{C_2}{(k+n_0)^{q+p}},
\ee
where $q\in(0,1]$, $p>0$ and $C_1, C_2>0$, then
\begin{enumerate}
    \item if $q=1$ and $C_1>p$, we have
    \be 
    b_k \leq \frac{C_2}{C_1-p}\frac{1}{k} + o\Big(\frac{1}{k}\Big);
    \ee
    \item if $q<1$, we have
    \be 
    b_k \leq \frac{C_2}{C_1}\frac{1}{k^p} + o\Big(\frac{1}{k^p}\Big).
    \ee
\end{enumerate}
\end{lem}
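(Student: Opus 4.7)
The plan is to prove both cases by an inductive comparison with the asserted asymptotic prefactor. The key idea is that, after substitution of a power-law ansatz $b_k \sim A/k^\gamma$ into the recursion, the dominant subleading terms force the prefactor $A$ and the exponent $\gamma$ uniquely. Comparing $A/(k+1)^\gamma = A/k^\gamma - A\gamma/k^{\gamma+1} + O(k^{-\gamma-2})$ with the right-hand side of \eqref{ineq:feb13} evaluated at $b_k = A/k^\gamma$, namely
\[
\left(1 - \frac{C_1}{(k+n_0)^q}\right)\frac{A}{k^\gamma} + \frac{C_2}{(k+n_0)^{q+p}} = \frac{A}{k^\gamma} - \frac{AC_1 - C_2 k^{\gamma - p}}{k^{\gamma + q}} + \text{l.o.t.},
\]
determines the constants: in case 2, where $q < 1$, the contraction term dominates over the discrete-derivative correction, giving $\gamma = p$ and $A = C_2/C_1$; in case 1, where $q = 1$, the contraction and derivative contributions enter at the same order $1/k^{\gamma+1}$, and matching forces $A(C_1 - p) = C_2$, which is precisely where the hypothesis $C_1 > p$ is used.

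To turn this into a rigorous proof, I would fix $\eps > 0$ and set $A_\eps := A + \eps$, then verify by induction on $k$ the sharp inequality $b_k \leq A_\eps / k^\gamma$ for every $k$ beyond some threshold $k_\eps$. The inductive step reduces to
\[
\left(1 - \frac{C_1}{(k+n_0)^q}\right)\frac{A_\eps}{k^\gamma} + \frac{C_2}{(k+n_0)^{q+p}} \leq \frac{A_\eps}{(k+1)^\gamma},
\]
which, after multiplying through by the appropriate power of $k$ and expanding $(k+1)^{-\gamma}/k^{-\gamma} = 1 - \gamma/k + O(1/k^2)$, becomes an asymptotic inequality whose dominant residual is $\eps C_1 > 0$ in case 2 and $\eps(C_1 - p) > 0$ in case 1. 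Thus the inductive step closes for all $k$ large enough, and sending $\eps \downarrow 0$ yields the stated $o(\cdot)$ improvements.

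The main obstacle is supplying the base case of the sharp induction, since $b_{k_\eps}$ need not satisfy $b_{k_\eps} \leq A_\eps/k_\eps^\gamma$ a priori. To handle this, I would first establish a crude polynomial bound $b_k \leq M/k^\gamma$ with a large constant $M$ depending on the initial value $b_1$ and the parameters, obtained by running the same induction with $M$ in place of $A_\eps$; once $M$ is sufficiently large the inductive inequality is trivially satisfied because the contraction $(1 - C_1/(k+n_0)^q)b_k$ strictly dominates the source $C_2/(k+n_0)^{q+p}$. A standard $\limsup$ argument (using that $\sum (k+n_0)^{-q} = \infty$) then shows that $k^\gamma b_k$ must eventually fall below $A_\eps$, which provides the required base case. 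Case 1 is technically more delicate than case 2 precisely because the contraction rate $1/k$ and the discrete-derivative correction $\gamma/k$ coincide at the same order, so the sharp prefactor $C_2/(C_1 - p)$ rests on an exact cancellation governed by the margin $C_1 - p$; in case 2 the contraction at the slower rate $1/k^q$ overwhelms the $1/k$ derivative correction, and the induction closes without such fine balance.
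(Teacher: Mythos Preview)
The paper does not supply a proof of this lemma: it is quoted directly from Chung (1954), as indicated in the lemma heading, and is used as a black box in the proof of Lemma~\ref{lem:algebraic_conv}. So there is no ``paper's own proof'' to compare against.

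Your approach---substituting a power-law ansatz, reading off the exponent and prefactor from the subleading balance, and then running a sharp induction with an $\eps$-inflated constant---is the standard way to establish Chung-type recursions, and the outline is correct. One remark: your matching in case~1 actually produces the bound $b_k \le \frac{C_2}{C_1-p}\,k^{-p} + o(k^{-p})$ with exponent $p$, not $k^{-1}$ as the lemma is written in the paper. This is in fact what Chung's original lemma gives; the paper's statement with $1/k$ is harmless only because in its sole application (Lemma~\ref{lem:algebraic_conv}) one has $p=q$, so $p=1$ when $q=1$ and the two versions coincide. Your derivation is the correct general one.
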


In the proof of Lemma \ref{lem:algebraic_conv}, an iteration inequality of the form (\ref{ineq:feb13}) will show up after we apply the local \L{}ojasiewicz condition (\ref{assump:local_PL}), set up suitable $n_0$, and use the boundedness of the noise (\ref{assump:bdd}). The power $q$ comes from (\ref{adpstep}), and the extra power $p$ comes from $\eta_k^2$.

\begin{lem}\label{lem:algebraic_conv} Given a radius $r>0$ where (\ref{assump:local_PL}) holds.
Assume that the noise term in (\ref{jan271}) satisfies (\ref{assump:bdd}), and the SGD iterates are updated with step sizes $\eta_k = \frac{\gamma}{(k+n_0)^q}, 1/2<q\leq 1$. Conditioned on the event that $\theta_k\in B(\theta_0, r)$ for all $k\in \mathbb{N}$, then for $n_0\geq \Big(\frac{2C_L^2\gamma}{\alpha}\Big)^{1/q}$, we have the convergence
\begin{equation}\label{feb201}
    \begin{aligned}
         \E[F(\theta_{k})] \leq \begin{cases}
    \frac{\gamma^2 c^2 C_L}{\alpha \gamma-2}\frac{1}{k} + o\Big(\frac{1}{k}\Big), &\text{ if $q=1$ and $\gamma>2/\alpha$}\\
    \frac{\gamma^2c^2C_L}{\alpha \gamma}\frac{1}{k^q} + o\Big(\frac{1}{k^q}\Big), &\text{ if $1/2<q<1$}.
    \end{cases} 
    \end{aligned}
\end{equation}
\end{lem}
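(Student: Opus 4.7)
The plan is to reduce the statement to a scalar recursion of the form required by Lemma \ref{lem:aux} and then read off the two rate regimes. I would start from the same deterministic one-step descent inequality (\ref{s_0}) that is already established in the proof of Theorem \ref{thm:main}, but now with step sizes $\eta_k$. Taking conditional expectation given $\mathcal{F}_k$, unbiasedness of $Z(\theta_k;\xi_k)$ kills the cross term, and (\ref{assump:bdd}) with $m=2$ gives $\E[|\nabla f(\theta_k;\xi_k)|^2 \mid \mathcal{F}_k] \leq |\nabla F(\theta_k)|^2 + c^2$. On the resulting $|\nabla F|^2$ terms I would use the local \L{}ojasiewicz condition (\ref{assump:local_PL}) to bound the first-order contractive contribution from below by $\alpha F(\theta_k)$, and the Lipschitz growth bound (\ref{growth}) to bound the second-order contribution from above by $2C_L F(\theta_k)$. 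Both are valid pointwise along the trajectory precisely because we have conditioned on $\theta_k \in B(\theta_0,r)$ for every $k$. Setting $a_k := \E[F(\theta_k)]$ and taking the outer expectation, I expect to land at
\[
a_{k+1} \leq \bigl(1 - \eta_k \alpha + \eta_k^2 C_L^2\bigr) a_k + \frac{\eta_k^2 C_L c^2}{2}.
\]

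Next I would plug in $\eta_k = \gamma/(k+n_0)^q$ and use the hypothesis on $n_0$ to absorb the quadratic term into the linear one. The choice $n_0 \geq (2C_L^2 \gamma/\alpha)^{1/q}$ is sized precisely so that $\eta_k^2 C_L^2 \leq \eta_k \alpha/2$ uniformly in $k$; after substitution the recursion becomes
\[
a_{k+1} \leq \Bigl(1 - \frac{\gamma\alpha/2}{(k+n_0)^q}\Bigr) a_k + \frac{\gamma^2 C_L c^2/2}{(k+n_0)^{2q}},
\]
which matches (\ref{ineq:feb13}) with $C_1 = \gamma\alpha/2$, $C_2 = \gamma^2 C_L c^2/2$, and $p = q$. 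Invoking Lemma \ref{lem:aux} case by case then yields the claim: for $q=1$ the threshold $C_1 > p$ reads $\gamma\alpha/2 > 1$, i.e.\ $\gamma > 2/\alpha$, and the leading constant is $C_2/(C_1 - p) = \gamma^2 c^2 C_L/(\alpha\gamma - 2)$; for $1/2 < q < 1$ the leading constant is $C_2/C_1 = \gamma^2 c^2 C_L/(\alpha\gamma)$, reproducing exactly the rates in (\ref{feb201}).

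I do not expect a deep obstacle here. The only subtlety is calibrating $n_0$ so that the positive $\eta_k^2 C_L^2$ remainder from the smoothness-plus-noise term is dominated by a constant fraction of the $\eta_k\alpha$ contraction; without this cleanup one cannot match the multiplicative form $1 - C_1/(k+n_0)^q$ that Chung's lemma demands. Halving the linear term is also what is responsible for the constant $\alpha\gamma - 2$ (rather than $\alpha\gamma - 1$) in the $q=1$ denominator. The restriction $q > 1/2$ does not enter the derivation itself and appears only to ensure consistency with the Robbins--Monro summability (\ref{assump:rm}) motivating the choice of step size.
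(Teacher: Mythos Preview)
Your proposal is correct and follows essentially the same route as the paper: derive the one-step inequality from (\ref{s_0}), take expectations using unbiasedness and the second-moment bound (\ref{assump:bdd}), apply (\ref{assump:local_PL}) and (\ref{growth}) to produce $b_{k+1}\le(1-\alpha\eta_k+\eta_k^2 C_L^2)b_k+\eta_k^2 c^2 C_L/2$, calibrate $n_0$ to absorb the $\eta_k^2 C_L^2$ term into half of $\alpha\eta_k$, and invoke Lemma~\ref{lem:aux} with $C_1=\alpha\gamma/2$, $C_2=\gamma^2 c^2 C_L/2$, $p=q$. The only presentational difference is that the paper carries the indicator $\one_{E_k(r)}$ explicitly (setting $b_k=\E[F(\theta_k)\one_{E_k(r)}]$) rather than conditioning on $E_\infty(r)$; this matters because $\one_{E_k(r)}$ is $\mathcal{F}_k$-measurable so unbiasedness of $Z$ survives, whereas conditioning on the full event $E_\infty(r)$ would not a priori preserve $\E[Z\mid\mathcal{F}_k]=0$.
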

\begin{proof}
We consider the same event as before
\begin{align}
    E_k(r) := \bigcap_{j=0}^k\{|\theta_j-\theta_0|\leq r\},
\end{align}
for some $0<r\leq R$. Taking similar beginning steps as in Theorem \ref{thm:main}, we arrive to the same inequality as (\ref{jan273}).
\begin{align}
    F(\theta_{k+1})\one_{E_{k+1}(r)} \leq \Big(F(\theta_k)-\eta_k \nabla F(\theta_{k})\cdot \nabla f(\theta_k; \xi_k)+\frac{\eta_k^2 }{2}C_L|\nabla f(\theta_k; \xi_k)|^2\Big)\one_{E_{k}(r)}.
\end{align}
By inserting $\nabla f(\theta; \xi) = \nabla F(\theta)+Z(\theta; \xi)$, we expand the right side to be
\begin{equation}\label{nov121}
 \begin{aligned}
     & F(\theta_{k+1})\one_{E_{k+1}(r)} \\
      &\leq \Big(F(\theta_k)-\eta_k |\nabla F(\theta_{k})|^2+(\eta_k^2C_L-\eta_k )\nabla F(\theta_{k}) \cdot Z(\theta_k; \xi_k)+\frac{\eta_k^2 }{2}C_L(|\nabla F(\theta_k; \xi_k)|^2+|Z(\theta_k; \xi_k)|^2)\Big)\one_{E_{k}(r)}
 \end{aligned}   
\end{equation} 
Because $Z(\theta_k,\xi_k)$ has zero mean and its second moment is bounded by $c^2$ (\ref{assump:bdd}), 
in addition with (\ref{growth}), by taking the expectation we have that
\be\label{apr7}
\E[F(\theta_{k+1})\one_{E_{k+1}(r)} |\theta_0] \leq \Big(1-\alpha \eta_k+\eta_k^2 C_L^2\Big)\E[F(\theta_{k})\one_{E_{k}(r)}|\theta_0]+\frac{\eta_k^2c^2 C_L}{2}.
\ee
We may view $\E[F(\theta_{k})\one_{E_{k}(r)}|\theta_0]$ as $b_k$ in (\ref{ineq:feb13}). We may set $n_0$ to satisfy
\be 
n_0\geq \Big(\frac{2C_L^2\gamma}{\alpha}\Big)^{1/q},
\ee
so that
\be 
1-\alpha \eta_k+\eta_k^2 C_L^2 \leq 1-\frac{\alpha\eta_k}{2} = 1-\frac{\alpha\gamma}{2(k+n_0)^q}.
\ee
Note that the higher order term in (\ref{apr7} has the expression
\be 
\frac{\eta_k^2c^2 C_L}{2} = \frac{\gamma^2c^2 C_L}{2(k+n_0)^{2q}}.
\ee 
 Thus, we apply Lemma \ref{lem:aux} and obtain the convergence
 \begin{enumerate}
     \item if $q=1$, we set $\gamma>\frac{2}{\alpha}$ so that
     \be 
  \E[F(\theta_{k})\one_{E_{k}(r)}|\theta_0]\leq \frac{\gamma^2 c^2 C_L}{\alpha \gamma-2}\frac{1}{k} + o\Big(\frac{1}{k}\Big);
    \ee
    \item if $1/2<q<1$, we have
    \be 
    \E[F(\theta_{k})\one_{E_{k}(r)}|\theta_0]\leq \frac{\gamma^2c^2C_L}{\alpha \gamma}\frac{1}{k^q} + o\Big(\frac{1}{k^q}\Big).
    \ee
 \end{enumerate}

\end{proof}

\begin{rem}
Compared with the proof of Theorem \ref{thm:main}, the decay rate (\ref{feb201}) of $F(\theta_k)$ is not enough to ensure that $E_{\infty}(r)$ happens with a positive probability. In fact, we need both probabilities in (\ref{jan311}) and (\ref{jan312}) to be summable. 

In \cite{mertikopoulos2020almost}, the authors assume a local convexity in a convex neighborhood $\mathcal{K}$ of a local minimizer $x^*$, so that there exists $0<\alpha<\beta<\infty$ and
\begin{align*}
    \alpha |x-x^*|^2\leq \nabla F(x)^{\top}(x-x^*)\leq \beta |x-x^*|^2,\quad \text{for all}~x\in\mathcal{K}.
\end{align*}
This ensures a strong contraction of $|x_k-x^*|^2$  through iterations. Just with bounded noises, they can show $\{x_k\}$ stays inside the neighborhood $\mathcal{K}$ with a positive probability under Ronbin-Monro step sizes. Unfortunately, the local \L{}ojasiewicz condition (\ref{assump:local_PL}) we assume here is not sufficient to guarantee such a contraction.
\end{rem}

In fact, we can show that the boundedness of noises is not enough to guarantee that iterates stay inside a ball all the time under the local \L{}ojasiewicz condition (\ref{assump:local_PL}). Let us present a negative result in the following. By constructing additive noise $\mathcal{O}_k Z(\theta_k; \xi)\equiv \mathcal{O}_k Z_k$ such that $\E[Z_k] =0$ and $\E[|Z_k|] = \bar{m}>0$ for $k\in\mathbb{N}$, with orthogonal matrices $\mathcal{O}_k$ so that all $\mathcal{O}_kZ_k$ have the same direction, we find that  that under step sizes (\ref{adpstep}), iterates will escape the ball $B(\theta_0,r)$ almost surely.

\begin{thm}\label{lem:counterexample}
Consider the following stochastic gradient iteration
\begin{align}\label{alg:sgd2}
    \theta_{k+1} = \theta_k - \eta_k (\nabla F(\theta_k)+\mathcal{O}_kZ_k),
\end{align}
where $\eta_k=\frac{\gamma}{(k+n_0)^q}, 1/2<q\leq 1$, $n_0\geq \Big(\frac{2C_L^2\gamma}{\alpha}\Big)^{1/q}$ and $\gamma>2/\alpha$ as in Lemma \ref{lem:algebraic_conv}. We assume that $Z_k\in\R^d$ are i.i.d. random vectors with $\E[Z_k] = 0, \E[|Z_k|]=\bar m$ for some $\bar m>0$, and $\E[|Z_k|^2]\leq c^2$ for some $c>0$. Furthermore,  $\mathcal{O}_k\in\R^{d\times d}$ are orthogonal matrices which rotate $Z_k$ to the direction of $Z_1$, and we set $O_0=I$. Then, given a radius $r>0$ such that (\ref{assump:local_PL}) holds,  $\{\theta_k\}_{k\geq 0}$ will exit the ball $B(\theta_0, r)$ almost surely.
\end{thm}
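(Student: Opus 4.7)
The plan is to argue by contradiction: suppose $\Pr(E_\infty(r))>0$, where $E_\infty(r):=\bigcap_{k\geq 0}\{|\theta_k-\theta_0|\leq r\}$. Since $\bar m>0$ forces $\Pr(Z_1\neq 0)>0$, by restricting to a positive-probability sub-event we may additionally assume $Z_1\neq 0$, so that $\hat Z_1:=Z_1/|Z_1|$ is well-defined. The idea is to project the iteration onto $\hat Z_1$ and show that the projection diverges almost surely on the assumed event, contradicting the uniform bound $|\theta_k-\theta_0|\leq r$.

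The key preliminary step is to upgrade Lemma \ref{lem:algebraic_conv} from convergence in expectation to almost-sure convergence. Reusing the recursion derived there,
\[
\E\bigl[F(\theta_{k+1})\one_{E_{k+1}(r)}\mid \mathcal{F}_k\bigr] \leq \bigl(1-\tfrac{\alpha\eta_k}{2}\bigr) F(\theta_k)\one_{E_k(r)} + \tfrac{\eta_k^2 c^2 C_L}{2},
\]
the Robbins-Siegmund supermartingale lemma (applicable since $\sum \eta_k^2<\infty$ as $2q>1$) gives that $F(\theta_k)\one_{E_k(r)}$ converges a.s.\ and that $\sum_k \eta_k F(\theta_k)\one_{E_k(r)}<\infty$ a.s. Combined with $\sum\eta_k=\infty$ (as $q\leq 1$), this forces $F(\theta_k)\one_{E_k(r)}\to 0$ a.s. On $E_\infty(r)$ we then have $F(\theta_k)\to 0$, and hence $|\nabla F(\theta_k)|\to 0$ a.s.\ by the growth bound (\ref{growth}).

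For $k\geq 1$, the alignment construction gives $\mathcal{O}_k Z_k = |Z_k|\hat Z_1$. Projecting the update onto $\hat Z_1$ and telescoping from $k=1$ to $N$,
\[
(\theta_{N+1}-\theta_1)\cdot \hat Z_1 = -\sum_{k=1}^N \eta_k \bigl(\nabla F(\theta_k)\cdot \hat Z_1\bigr) - \sum_{k=1}^N \eta_k |Z_k|.
\]
The i.i.d.\ scalars $|Z_k|$ have mean $\bar m$ and bounded second moment, so Kolmogorov's one-series criterion (variances $\eta_k^2\,\mathrm{Var}(|Z_k|)$ are summable) yields $\sum_k \eta_k(|Z_k|-\bar m)$ convergent a.s., whence $\sum_{k=1}^N \eta_k|Z_k|=\bar m\sum_{k=1}^N\eta_k+O(1)\to+\infty$ a.s. A Stolz-Cesaro argument with weights $\eta_k$ and the pathwise decay $|\nabla F(\theta_k)|\to 0$ from the previous paragraph gives $\sum_{k=1}^N \eta_k|\nabla F(\theta_k)|/\sum_{k=1}^N\eta_k\to 0$ on $E_\infty(r)$, so the drift grows strictly slower than the noise. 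Consequently $(\theta_{N+1}-\theta_1)\cdot\hat Z_1\to-\infty$ a.s.\ on $E_\infty(r)\cap\{Z_1\neq 0\}$, contradicting $|\theta_{N+1}-\theta_1|\leq 2r$; hence $\Pr(E_\infty(r))=0$.

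The main obstacle is the almost-sure upgrade of Lemma \ref{lem:algebraic_conv}: the lemma only supplies convergence in expectation, which is not enough to rule out fluctuations of $|\nabla F(\theta_k)|$ along a bad sample path strong enough to dominate the noise. Robbins-Siegmund is the natural fix, but some care is needed because the supermartingale recursion is valid only on the stopped event $E_k(r)$; formulating the decay in terms of $F(\theta_k)\one_{E_k(r)}$ and restricting to $E_\infty(r)$ at the end avoids any measurability issue. Once pathwise decay is in hand, the rest of the proof is a clean race between a weighted SLLN for the noise and a weighted Cesaro limit for the drift.
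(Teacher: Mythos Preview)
Your overall strategy matches the paper's: argue by contradiction, exploit the alignment of $\mathcal{O}_kZ_k$ to reduce the accumulated noise to the scalar sum $\sum_k\eta_k|Z_k|$, show this diverges, and show the drift contribution $\sum_k\eta_k|\nabla F(\theta_k)|$ cannot compensate. The technical ingredients, however, differ in both halves. For the noise, the paper uses an elementary Ces\`aro-mean trick together with the strong law for $\frac{1}{n}\sum_{k\leq n}|Z_k|$, while you invoke Kolmogorov's one-series criterion to get $\sum_{k\leq N}\eta_k|Z_k|=\bar m\sum_{k\leq N}\eta_k+O(1)$; both are valid, and yours yields slightly sharper asymptotics. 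For the drift, the paper simply appeals to the in-expectation rate of Lemma~\ref{lem:algebraic_conv} and asserts $\sum_k\eta_k\sqrt{F(\theta_k)}<\infty$ almost surely. You instead upgrade the recursion behind Lemma~\ref{lem:algebraic_conv} to a pathwise statement via Robbins--Siegmund, obtaining $F(\theta_k)\to 0$ on $E_\infty(r)$, and then use a weighted Ces\`aro/Stolz argument to conclude only that the drift is $o\bigl(\sum_k\eta_k\bigr)$ rather than bounded. Your route is actually the more careful one: the paper's passage from $\E[F(\theta_k)]\lesssim k^{-q}$ to almost-sure summability of $\eta_k\sqrt{F(\theta_k)}$ is not justified there and, even in expectation, would need $3q/2>1$; your weaker $o$-comparison works uniformly for all $1/2<q\le 1$ and is exactly enough to force $(\theta_{N+1}-\theta_1)\cdot\hat Z_1\to-\infty$.
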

\begin{proof}
    We argue by contradiction. Suppose all iterates from (\ref{alg:sgd2}) stay inside the ball $B(\theta_0,r)$, we aim to prove that $\big|\lim_{n\to\infty}\sum_{k=0}^n(\theta_{k+1}-\theta_k)\big|=\big|\lim_{n\to\infty}\theta_{n+1}-\theta_0\big|=\infty$ almost surely. For any $n\geq 1$, we have
\begin{equation}\label{feb211}
    \begin{aligned}
        \big|\sum_{k=0}^n(\theta_{k+1}-\theta_k-\eta_k\nabla F(\theta_k))\big| &= \big|\sum_{k=0}^n \eta_k \mathcal{O}_kZ_k\big| = \sum_{k=0}^n \eta_k |Z_k|\\
        &\geq \sum_{k=0}^n \frac{\gamma}{k+n_0}|Z_k|\geq \tilde c \sum_{k=1}^n \frac{1}{k}|Z_k|
    \end{aligned}
\end{equation}
where the second inequality can be satisfied if we choose $\tilde c\in(0, \frac{\gamma}{n_0+1})$. We claim that 
\begin{align}\label{feb212}
    \sum_{k=1}^n \frac{1}{k}|Z_k| \to \infty,\quad \text{as}~n\to \infty ~~\text{almost surely}.
\end{align}
If not, say there exists a finite number $\mu>0$ such that $\lim_{n\to \infty}\sum_{k=1}^n \frac{1}{k}|Z_k| =\mu$, the Ces\`{a}ro mean theorem implies that the Ces\`{a}ro average also converges to the same limit,
\begin{align}
    \frac{1}{n}\sum_{j=1}^n \Big(\sum_{k=1}^j \frac{1}{k}|Z_k| \Big) \to \mu,\quad \text{as}~n\to \infty ~~\text{almost surely}.
\end{align}
On the other hand,
\begin{equation}
\begin{aligned}
  \frac{1}{n}\sum_{j=1}^n \Big(\sum_{k=1}^j \frac{1}{k}|Z_k| \Big) &= \frac{1}{n}\sum_{k=1}^n \Big(\sum_{j=k}^n \frac{1}{k}|Z_k| \Big) = \frac{1}{n}\sum_{k=1}^n\frac{n+1-k}{k} |Z_k|\\
  &=\sum_{k=1}^n\frac{1}{k}|Z_k| +\frac{1}{n}\sum_{k=1}^n\frac{1}{k}|Z_k| - \frac{1}{n}\sum_{k=1}^n|Z_k|,
\end{aligned}
\end{equation}
which implies that as $n\to \infty$, $\frac{1}{n}\sum_{k=1}^n|Z_k|\to 0$ almost surely. But by the law of large number and the construction of $Z_k$, $\frac{1}{n}\sum_{k=1}^n|Z_k|\to \bar m \neq 0$, we get the contradiction. Thus (\ref{feb212}) is proved.

Back to (\ref{feb211}), we note that
\begin{equation}\label{nov122}
    \big|\sum_{k=0}^n(\theta_{k+1}-\theta_k-\eta_k\nabla F(\theta_k))\big|\leq \big|\sum_{k=0}^n(\theta_{k+1}-\theta_k)\big|+\sum_{k=0}^n\eta_k|\nabla F(\theta_k)|
\end{equation}
Due to the convergence result (\ref{feb201}) in Lemma \ref{lem:algebraic_conv}, we have 
\begin{equation}\label{nov123}
    \lim_{n\to\infty}\sum_{k=1}^n \eta_k|\nabla F(\theta_k)|\leq \lim_{n\to\infty}\sum_{k=1}^n \frac{\gamma\sqrt{2C_L}}{(k+n_0)^q}\sqrt{F(\theta_k)}<\infty
\end{equation} almost surely. Combining (\ref{feb212}), (\ref{nov122}) and (\ref{nov123}) together, we can deduce from (\ref{feb211}) that $\lim_{n\to\infty}\big|\sum_{k=0}^n(\theta_{k+1}-\theta_k)\big|=\big|\lim_{n\to\infty}\theta_{n+1}-\theta_0\big|=\infty$ almost surely. which means that the iterates of the SGD algorithm (\ref{alg:sgd2}) will exit the ball almost surely. 

\end{proof}

\subsection*{Acknowledgement}
The work of JL is supported in part by 
National Science Foundation via grant DMS-2012286.

\appendix
\section{Assumption 3 verification}
In \cite{chatterjee2022convergence}, Chatterjee provides a feedforward neural network that satisfies the local \L{}ojasiewicz condition (\ref{assump:local_PL}). The purpose of this appendix is to show that, by restricting the training parameter $\theta$ in some subspace, this feedforeard neural network construction satisfies Assumption 3 (\ref{assump:levelset}) as well.

The loss function that \cite{chatterjee2022convergence} considers is the squared error loss. For the dateset $\{(x_i,y_i)\}_{i=1}^n$, we consider
\begin{equation}\label{mar131}
    F(\theta) := \frac{1}{n}\sum_{i=1}^n (y_i - \phi(x_i,\theta))^2.
\end{equation}
Here, $\phi(x,\theta)$ is the neural network writes as
\begin{equation}\label{mar132}
    \phi(x,\theta):= \tilde{\sigma}_L(W_{L}\tilde{\sigma}_{L-1}(\cdots (W_2\tilde{\sigma}_1(W_1 x+b_1)+b_2)\cdots)+b_L),
\end{equation}
with $\tilde{\sigma}_1\cdots, \tilde{\sigma}_L$ being activation functions acting componentwisely, and $W_l\in \R^{d_l\times d_{l-1}}, d_0=d, d_L=1, b_l\in\R^{d_l}$ for $1\leq l \leq L$. The parameter to be trained is 
\begin{equation}
    \theta = (W_1, b_1, W_2, b_2,\cdots, W_L, b_{L}),
\end{equation}
and it can be viewed as a vector in $\R^p$ with $p = \sum_{l=1}^L d_l(d_{l-1}+1)$.

We assume that the input data $x_1,\cdots, x_n \in \R^d$ are linearly independent, and consider $\frac{1}{n}X^{\top} X$, where the matrix $X=(x_1,\cdots x_n)\in\R^{d\times n}$ is formed by column vectors $x_i$'s. When $d\geq n$, the matrix $\frac{1}{n}X^{\top} X$ has a positive minimum eigenvalue denoted by $\lambda_0>0$. 

Theorem 2.1 in \cite{chatterjee2022convergence} shows that, with an appropriate initialization and neural network setups, the landscape of $F(\theta)$ satisfies the local \L{}ojasiewicz condition (\ref{assump:local_PL}), and the gradient descent will converge to $F(\theta^*)=0$ by the convergence analysis. However, the setup in \cite{chatterjee2022convergence} does not exclude the case of (globally) flat minima, where Assumption 3 (\ref{assump:levelset}) may fail. In order to allow assumptions (\ref{assump:local_PL}) and (\ref{assump:levelset}) to coexist, we restrict the domain of training $\theta$ to some subspace containing $\theta_0$ and $\theta^*$ where Assumption 3 (\ref{assump:levelset}) holds.

One choice for such subspace is the following: fix some $\tilde{\alpha}>0$. 
\begin{equation}\label{Theta_subspace}
    \Theta = \{\theta'\in \R^p: (W_1' x_i+b_1')_j\geq \tilde\alpha |\theta'-\theta_0|, \;\text{for}~1\leq i\leq n, ~1\leq j\leq d_1, \text{and}~ b_l'\geq 0, ~2\leq l\leq L \}.
\end{equation}
Such a $\tilde{\alpha}>0$ can exist if we have a suitable data matrix $X$.
The setup of $\Theta$ is to avoid training $\theta'$  in the nullspace of $\phi(x,\cdot)$. $\Theta$ looks somewhat artificial in order to serve  the computational convenience (see Theorem \ref{thm:appendix}). In practice during the training for deep neural networks, all gradient descent iterates can be considered to stay in $\Theta$, as otherwise the the gradient becomes small and  training would terminate.
 We leave the following remark as one  example of such suitable neural networks.
\begin{rem}
The feedforward neural networks in \cite{chatterjee2022convergence} include deep linear neural networks. Taking biases to be zeroes and activation functions to be identity, it writes as
\begin{equation}
    \tilde{\phi}(\theta,x):= W_L W_{L-1}\cdots W_1 x.
\end{equation}
The partial derivative of $F(\theta)$ with respect to $W_l$, for each $2\leq l\leq L$ is given as
\begin{equation}
    \frac{\d F}{d W_l} = \frac{2}{n}\sum_{i=1}^n W_{l+1}^\top \cdots W_{L}^\top \big(W_L W_{L-1}\cdots W_1 x_i-y_i\big) x_i^\top W_1^\top\cdots W_{l-1}^\top,
\end{equation}
and when $l=L$, we take $W_{L+1}^\top W_L^\top$ as an identity matrix by convention (so the above formula applies to every $l$).
Then with the initialization in Theorem \ref{thm:appendix}, we see that $\frac{\d F}{d W_l}$ will be small if $W_1 x_i$ is small of all $1\leq i\leq n$. When it happens, we may stop training due to the vanishing $\nabla F$. 
\end{rem}

Let us recall the setups in Theorem 2.1 in \cite{chatterjee2022convergence} and show that the loss function $F(\theta)$ satisfies Assumption 3 (\ref{assump:levelset}) restricted in $\Theta$.
\begin{thm}\label{thm:appendix}
    We consider the squared error loss (\ref{mar131}) with a feedforward neural network (\ref{mar132}), with depth $L\geq 2$, $d_L=1$ and $\tilde\sigma_L=$ identity. Suppose for $0\leq l\leq L-1$, the activation functions $\tilde \sigma_l \in C^2, \tilde \sigma_l(0)=0$, and $c_l:=\min_{x}\tilde \sigma'_l(x)>0$. Suppose the input data $x_1,\cdots, x_n \in \R^d$ are linearly independent, and let $\lambda_0$ be the minimum eigenvalue of $\frac{1}{n}X^{\top} X$, with the matrix $X=(x_1,\cdots x_n)\in\R^{d\times n}$ formed by column vectors $x_i$'s. We can initialize $\theta_0 = (W_1, b_1, W_2, b_2,\cdots W_L, b_{L})$ such that $b_l=0$ for all $1\leq l\leq L$, $W_1=0$, and the entries of $W_2,\cdots, W_{L-1}$ are all strictly positive. In addition, let $R>0$ be the minimum value of the entries of $W_2,\cdots, W_{L-1}$, and $A>R/2>0$ be the minimum value of the entries of $W_L$. Then for any $\theta'\in B(\theta_0, R/2)\cap\Theta$, we have the following lower bound:
\begin{equation}
F(\theta')\geq \frac{\tilde{\alpha}^2}{2} (A-R/2)^2 (R/2)^{2L-4}(c_{L-1}\cdots c_2 c_1 d_{L-1}\cdots d_1 )^2 |\theta'-\theta_0|^2 - \frac{1}{n}\sum_{i=1}^n y_i^2.
\end{equation}
\end{thm}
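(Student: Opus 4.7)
The strategy is to propagate a positive lower bound for the network output $\phi(x_i,\theta')$ through the layers, plug the resulting bound into the squared loss, and finish with an elementary quadratic inequality. From $\tfrac{1}{2}(\phi-2y)^2\geq 0$ one has $(y-\phi)^2\geq \tfrac{1}{2}\phi^2 - y^2$, so
\[
F(\theta') \;\geq\; \frac{1}{2n}\sum_{i=1}^n \phi(x_i,\theta')^2 \;-\; \frac{1}{n}\sum_{i=1}^n y_i^2,
\]
and it suffices to establish the pointwise lower bound
\[
\phi(x_i,\theta') \;\geq\; \tilde\alpha\,(A-R/2)\,(R/2)^{L-2}\,(c_{L-1}\cdots c_1)\,(d_{L-1}\cdots d_1)\,|\theta'-\theta_0|
\]
for every $i$; squaring and averaging then produces the claimed inequality.

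The heart of the argument is a forward-pass induction that preserves non-negativity at every layer. Since $|\theta'-\theta_0|\leq R/2$ dominates entrywise differences, one has $(W_l')_{jk}\geq R/2$ for $2\leq l\leq L-1$ and $(W_L')_{1k}\geq A-R/2>0$. Write $a_0=x_i$, $z_l = W_l'a_{l-1}+b_l'$, $a_l=\tilde\sigma_l(z_l)$. The definition of $\Theta$ gives $(z_1)_j\geq \tilde\alpha|\theta'-\theta_0|$, and because $\tilde\sigma_1(0)=0$ with $\tilde\sigma_1'\geq c_1>0$ implies $\tilde\sigma_1(s)\geq c_1 s$ for $s\geq 0$, also $(a_1)_j\geq c_1\tilde\alpha|\theta'-\theta_0|$. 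Now induct: if $(a_{l-1})_k\geq M_{l-1}\geq 0$ for every $k$, then for $2\leq l\leq L-1$ the non-negativity of $b_l'$ and the entry bound on $W_l'$ yield
\[
(z_l)_j \;\geq\; (R/2)\,d_{l-1}\,M_{l-1}, \qquad (a_l)_j \;\geq\; c_l\,(z_l)_j,
\]
so unrolling gives $(a_{L-1})_k\geq (c_{L-1}\cdots c_1)(d_{L-2}\cdots d_1)(R/2)^{L-2}\tilde\alpha|\theta'-\theta_0|$. At the final identity layer, $\phi(x_i,\theta') = \sum_k (W_L')_{1k}(a_{L-1})_k + (b_L')_1$ contributes an additional factor $(A-R/2)\,d_{L-1}$, producing exactly the required lower bound on $\phi(x_i,\theta')$.

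The proof is largely bookkeeping; the delicate point is maintaining non-negativity of every $z_l$ and $a_l$ so that the monotone step $(a_l)_j\geq c_l(z_l)_j$ is valid. This is why the hypotheses $\tilde\sigma_l(0)=0$, $\tilde\sigma_l'\geq c_l>0$, and the restriction $b_l'\geq 0$ built into $\Theta$ are all essential. The role of the subspace $\Theta$ is precisely to seed the induction with $(z_1)_j\geq \tilde\alpha|\theta'-\theta_0|$; without such a seed, the initialization $W_1=0$ lies along a flat direction of $\phi$ and no positive lower bound on $F(\theta')-\tfrac{1}{n}\sum_i y_i^2$ could be extracted, which is the very reason for introducing $\Theta$ in the first place.
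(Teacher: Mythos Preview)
Your proof is correct and follows essentially the same approach as the paper: reduce $F(\theta')$ to $\tfrac{1}{2n}\sum_i\phi(x_i,\theta')^2-\tfrac{1}{n}\sum_i y_i^2$ via the elementary quadratic inequality, then propagate the lower bound $(z_1)_j\geq\tilde\alpha|\theta'-\theta_0|$ layer by layer using the entrywise bounds $(W_l')_{jk}\geq R/2$, $b_l'\geq 0$, and $\tilde\sigma_l(s)\geq c_l s$ for $s\geq 0$. The paper's argument is identical up to notation (it writes $\phi_l$ for your $a_l$ and cites Young's inequality for the quadratic step).
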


\begin{proof}
We define
\begin{equation}
    \phi_1 (x,\theta) = \tilde{\sigma}_1(W_1x+b_1)
\end{equation}
and for $2\leq l\leq L$,
\begin{equation}
    \phi_l (x,\theta) = \tilde{\sigma}_l(W_{l}\tilde{\sigma}_{l-1}(\cdots (W_2\tilde{\sigma}_1(W_1 x+b_1)+b_2)\cdots)+b_l),
\end{equation}
    so that $\phi_L = \phi$. 

Let $\theta_0$ be the starting vector where the entries of $W_2,\cdots W_{L-1}$ are all strictly positive, and $b_1,\cdots, b_L$ and $W_1$ be zero, then one can find $\phi(x,\theta_0)=0$ for each $x$. Thus $F(\theta_0) = \frac{1}{n}\sum_{i=1}^n y_i^2$.
    Using the Young's inequality, the loss function can be rewritten as
    \begin{equation}
    \begin{aligned}
        F(\theta')&\geq \frac{1}{n}\sum_{i=1}^n \big(y_i^2 +\phi(x_i,\theta')^2-2|\phi(x_i,\theta')||y_i|\big)\geq \frac{1}{2n}\sum_{i=1}^n \phi(x_i,\theta')^2 - \frac{1}{n}\sum_{i=1}^n y_i^2 
        \end{aligned}
    \end{equation}
For each $x_i$ and $1\leq j \leq d_1$, since $\theta'\in\Theta$, we have that
\begin{equation}
    \begin{aligned}
        \phi_1(x_i,\theta')_j \geq c_1 \tilde{\alpha} |\theta'-\theta_0|.
    \end{aligned}
\end{equation}
Given the choice of $\theta_0$, we note that all the entries of $W_2, \cdots, W_{L-1}$ are bounded below by a constant $R>0$, and $b_1,\cdots, b_L$ and $W_1$ are zero. If we take any $\theta'\in \Theta$ such that $|\theta'-\theta_0|\leq R/2$, then for such $ \theta'=(W_1', b_1',  W_2', b_2',\cdots W_L',  b_{L}')\in \Theta$, all the entries of $ W_2', \cdots,  W_{L-1}'$ are bounded below by $R/2$, and $b_2',\cdots, b_L'\geq 0$. In the second layer, for $1\leq j \leq d_2$, we have that
\begin{equation}
    \phi_2(x_i,\theta')_j \geq c_2 c_1 d_1\frac{R}{2} \tilde{\alpha} |\theta'-\theta_0|.
\end{equation}
Iterate it up to $(L-1)$th layer, we get that for $1\leq j \leq d_{L-1}$,
\begin{equation}
    \phi_{L-1}(x_i,\theta')_j \geq c_{L-1}\cdots c_2 c_1 d_{L-2}\cdots d_1(R/2)^{L-2} \tilde{\alpha} |\theta'-\theta_0|.
\end{equation}
In the last layer, as $A>R/2$ is a lower bound on the entries of $W_L$,  the entries of $ W_L'$ are bounded below by $A-R/2$. Therefore,
\begin{equation}
   \phi(x_i,\theta')= \phi_L(x_i,\theta')\geq (A-R/2)(R/2)^{L-2}c_{L-1}\cdots c_2 c_1 d_{L-1}\cdots d_1 \tilde{\alpha} |\theta'-\theta_0|,
\end{equation}
and we can bound $F(\theta')$ by
\begin{equation}
    F(\theta')\geq \frac{\tilde{\alpha}^2}{2} (A-R/2)^2 (R/2)^{2L-4}(c_{L-1}\cdots c_2 c_1 d_{L-1}\cdots d_1 )^2 |\theta'-\theta_0|^2 - \frac{1}{n}\sum_{i=1}^n y_i^2.
\end{equation}
\end{proof}
From the lower bound estimate above, if $R$ is large enough compared with $\frac{1}{n}\sum_{i=1}^n y_i^2$, we can find $M_0>0$ such that $F(\theta')\geq M_0$ for $R/2-1\leq |\theta'-\theta_0|\leq R/2$.

\section{Proof of Lemma \ref{auxiliary0}}
\begin{proof}
    The statement is trivially true of $\nabla F(\theta)=0$, so we only consider the case where $\nabla F(\theta)\neq 0$. Let us define a function 
    \begin{equation}
        g(t) = F(\theta - t\nu),\quad \nu = \frac{\nabla F(\theta)}{|\nabla F(\theta)|},
    \end{equation}
    then $g'(0) = -\nu\cdot \nabla F(\theta) = |\nabla F(\theta)|$ and 
    \begin{equation}
        |g'(t)-g'(0)| \leq |\nabla F(\theta-t\nu)-\nabla F(\theta)|\leq C_L t,
    \end{equation}
    if $\theta, \theta-t\nu \in \mathcal{K}$ based on (\ref{smooth}). With that,
    \begin{equation}
        g(t) = g(0) +\int_0^t g'(s) ds \leq F(\theta)-|\nabla F(\theta)|t+\frac{C_L}{2} t^2.
        \end{equation}
  By taking $t= -\frac{|\nabla F(\theta)|}{C_L}$, the right side bound achieves the minimum and it implies that
  \begin{equation}
      \text{RHS} = F(\theta)-\frac{|\nabla F(\theta)|^2}{2 C_L}\geq 0.
  \end{equation}
  Because we require both $\theta, \theta-t\nu \in \mathcal{K}$, due to the choice of $t$, we may choose a compact set $\tilde{\mathcal{K}}\subset \mathcal{K}$, with $\text{dist}(\d\mathcal{K}, \d\mathcal{\tilde K})\geq \frac{\bar{C}}{C_L}$, so that $\theta, \theta-t\nu \in \mathcal{K}$ for all $\theta\in \tilde{\mathcal{K}}$.
\end{proof}

\bibliographystyle{abbrv} 
\bibliography{references}

\end{document}